\documentclass{article}

% use Times
\usepackage{times}
% For figures

% For citations
\usepackage{natbib}

%% Useful packages
\usepackage{graphicx}
\usepackage{subfigure}

\usepackage{booktabs}
\usepackage{color}
\usepackage{algorithm}
\usepackage{algorithmic}
\usepackage{placeins}
\usepackage{amsmath}
\usepackage{amsthm}
\usepackage{amssymb}
\usepackage{natbib}
\usepackage{caption}
\usepackage{tikz}
\usepackage{tabularx}

\newtheorem{prop}{Proposition}

\newcommand{\epsv}{{\boldsymbol{\epsilon}}}
\newcommand{\muv}{{\boldsymbol{\mu}}}
\newcommand{\sigmav}{{\boldsymbol{\sigma}}}

\newcommand\numberthis{\refstepcounter{equation}\tag{\theequation}}

\usepackage[accepted]{icml2017} 
\usepackage{hyperref}
\usepackage[english]{babel}
\usepackage{url}
\usepackage{mathtools}
\usepackage{amsmath}
\usepackage{amssymb}
\usepackage{wrapfig}
\usepackage{bm}
\usepackage{psfrag}
\usepackage{tikz}
\usepackage{epstopdf}
\usepackage{adjustbox}

\newcommand{\beq}[1][\vspace{0.3em}]{#1\begin{equation}}
\newcommand{\eeq}{\end{equation}}
\newcommand{\rbx}[2][!]{
	\resizebox{#1}{!}{$#2$}	
}
\newcommand{\eqs}[1]{\begin{align}#1\end{align}}

\newcommand{\bit}{\vspace{0mm}\begin{itemize}}
\newcommand{\eit}{\vspace{0mm}\end{itemize}}
\newcommand{\ben}{\vspace{0mm}\begin{enumerate}}
\newcommand{\een}{\vspace{0mm}\end{enumerate}}

\newcommand{\Iv}[0]{{{\bf I}}}

\newcommand{\Xv}[0]{{{\bf X}}}

\newcommand{\fv}[0]{{{\bf f}}}
\newcommand{\gv}[0]{{{\bf g}}}
\newcommand{\hv}[0]{{{\bf h}}}

\newcommand{\rv}[0]{{{\bf r}}}

\newcommand{\uv}[0]{{{\bf u}}}
\newcommand{\vv}[0]{{{\bf v}}}

\newcommand{\xv}[0]{{{\bf x}}}

\newcommand{\zv}[0]{{{\bf z}}}

\newcommand{\bb}[1]{\mathbb{#1}}

\newcommand{\mc}[1]{\mathcal{#1}}

\icmltitlerunning{Learning Hierarchical Features from Generative Models}

\begin{document} 

\twocolumn[
\icmltitle{Learning Hierarchical Features from Generative Models}

\begin{icmlauthorlist}
\icmlauthor{Shengjia Zhao}{stanford}
\icmlauthor{Jiaming Song}{stanford}
\icmlauthor{Stefano Ermon}{stanford}
\end{icmlauthorlist}

\icmlaffiliation{stanford}{Stanford University}

\icmlcorrespondingauthor{Shengjia Zhao}{zhaosj12@stanford.edu}
\icmlcorrespondingauthor{Jiaming Song}{tsong@stanford.edu}
%\icmlcorrespondingauthor{Stefano Ermon}{ermon@stanford.edu}

% You may provide any keywords that you 
% find helpful for describing your paper; these are used to populate 
% the "keywords" metadata in the PDF but will not be shown in the document
\icmlkeywords{}

\vskip 0.3in
]
\printAffiliationsAndNotice{}

\begin{abstract}
Deep neural networks have been shown to be very successful at learning feature hierarchies in supervised learning tasks. Generative models, on the other hand, have benefited less from hierarchical models with multiple layers of latent variables. In this paper, we prove that hierarchical latent variable models do not take advantage of the hierarchical structure when trained with existing variational methods, and provide some limitations on the kind of features existing models can learn. Finally we propose an alternative architecture that do not suffer from these limitations. Our model is able to learn highly interpretable and disentangled hierarchical features on several natural image datasets with no task specific regularization or prior knowledge. 
\end{abstract}

\section{Introduction}
%One central idea of deep learning that has repeatedly proved itself useful is that of hierarchy. In particular, development of deep deterministic networks led to dramatic improvements and often human level performance in many classification, detection, translation tasks \citep{deep_architectures2009}. 

A key property of deep feed-forward networks is that they tend to learn learn increasingly abstract and invariant representations at higher levels in the hierarchy~\citep{deep_architectures2009,visualize_cnn2014}
%. This has been demonstrated by various methods of visualization for deep networks
%~\citep{visualize_cnn2014}. 
In the context of image data, low levels may learn features corresponding to edges or basic shapes, while higher levels learn more abstract features, such as object detectors~\citep{visualize_cnn2014}.

Generative models with a hierarchical structure, where there are multiple layers of latent variables, have been less successful compared to their supervised counterparts~\citep{sonderby2016ladder}. In fact, the most successful generative models often use only a single layer of latent variables \citep{deconvolutional_gan2015,conditional_pixelcnn2016}, and those that use multiple layers only show modest performance increases in quantitative metrics such as log-likelihood \citep{sonderby2016ladder,matnet_variational_network2016}. 
Because of the difficulties in evaluating generative models \citep{generative_model_evaluation2015}, and the fact that adding network layers increases the number of parameters, it is not always clear whether the improvements truly come from the choice of a hierarchical architecture. Furthermore, the capability of learning a hierarchy of increasingly complex and abstract features has only been demonstrated to a limited extent, with feature hierarchies that are not nearly as rich as the ones learned by feed-forward networks~\citep{pixel_vae2016}.

\begin{figure}[t]
%\centering
\includegraphics[width=\linewidth]{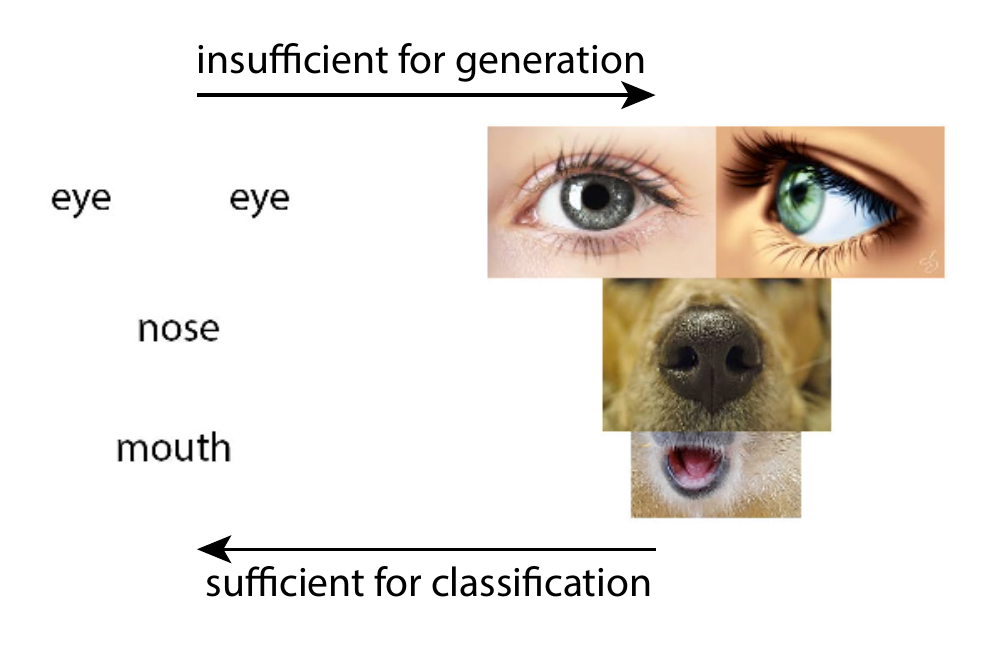}
\caption{Left: Body parts feature detectors only carry a small amount of information about an underlying image, yet, it is sufficient 
%to be clearly recognizable as a face.
for a confident classification as a face.
Right: if a hierarchical generative model attempts to reconstruct an image based on these high-level features, it could generate inconsistent images, even when each part can be perfectly generated. Even though this "face" is clearly absurd, Google cloud platform classification API can identify with 93\% confidence that this is a face. 
%(So can humans, perhaps)
%\stefano{if time, add left-right arrow "sufficient for classification", and right-left arrow "insufficient for generation"}
}
\label{fig:unreasonable_face}
\end{figure}

Part of the problem is inherent and unavoidable for any generative model. The heart of the matter is that while highly invariant and local features are often sufficient for classification, generative modeling requires preservation of details (as illustrated in Figure~\ref{fig:unreasonable_face}). In fact, most latent features in a generative model of images cannot even demonstrate scale and translation invariance.
%\jiaming{that is if we assume "eyes" to be our latent features}. 
The size and location of a sub-part often has to be dependent on the other sub-parts. For example, an eye should only be generated with the same size as the other eye, at symmetric locations with respect to the center of the face, with appropriate distance between them. The inductive biases  that are directly encoded into the architecture of convolutional networks is not sufficient in the context of generative models.

On the other hand, other problems are associated with specific models or design choices, and may be avoided with deeper understanding and careful design. The goal of this paper is to provide a deeper understanding of the design and performance of common hierarchical latent variable models. We focus on variational models, though most of the conclusions can be generalized to adversarially trained models that support inference \citep{dumoulin2016adversarially, donahue2016adversarial}. In particular, we study two classes of models with a hierarchical structure: 

\textbf{1) Stacked hierarchy:} The first type we study is characterized by recursively stacking generative models on top of each other. Most existing models \citep{sonderby2016ladder,pixel_vae2016,matnet_variational_network2016,vae_autoregressive_flow2016}, %\stefano{hierarchical, ladder, etc} actually there is few literature to cite from; every paper use HVAE as an additional feature
belong to this class. We show that these models have two limitations. 
First, 
%they have a disadvantages in representation efficiency, that is, they need more parameters to represent the same distribution compared to shallow models. This is contrary to the central motivation of using deep networks \citep{bengio2009learning} where a feed-forward deep network can represent certain functions with exponentially better efficiency. More specifically, 
we show that if these models can be trained to optimality, then the bottom layer alone contains enough information to reconstruct the data distribution, and the layers above the first one can be ignored. This result holds under fairly general conditions, and does not depend on the specific family of distributions used to define the hierarchy (e.g., Gaussian). 
Second, we argue that many of the building blocks commonly used to construct hierarchical generative models are unlikely to help us learn disentangled features. 
%We propose a whole class of such models from in a principled manner, of which current models are special cases. The principled derivations gives us insight into the properties and limitations of these models.

%\stefano{i don't like psueo hierarchy name. any other ideas? maybe horizontal hierarchy?}\shengjia{how about architectural?flat?induced?}
\textbf{2) Architectural hierarchy:} Motivated by these limitations, we turn our attention to single layer latent variable models. We propose an alternative way to learn disentangled hierarchical features by crafting a network architecture that prefers to place high-level features on certain parts of the latent code, and low-level features in others. We show that this approach, called \textbf{Variational Ladder Autoencoder}, allows us to learn very rich feature hierarchies on natural image datasets such as MNIST, SVHN ~\citep{netzer2011reading} and CelebA ~\citep{liu2015faceattributes}; in contrast, generative models with a stacked hierarchical structure fail to learn such features. 
%We hope that our work can inspire research in hierarchical generative models for structured feature learning.
%\stefano{say why it is significant}

\section{Problem Setting}
We consider a family of latent variable models specified by a joint probability distribution $p_\theta(\xv, \zv)$ over a 
set of observed variables $\xv$ and latent variables $\zv$. The family of models is assumed to be parametrized by $\theta$. 
%Let $\xv$ be the observed variables, $\zv$ be the latent variables with $p(\xv, \zv)$ as their joint distribution, 
Let $p_\theta(\xv)$ denote the marginal distribution of $\xv$. We wish to maximize the marginal log-likelihood $p(\xv)$ over a dataset $\Xv = \{\xv^{(1)}, \ldots, \xv^{(N)}\}$ drawn from some unknown underlying distribution $p_{data}(\xv)$. Formally we would like to maximize
\beq
\label{maxlik}
\log p_{\theta} (\Xv) = \sum_{n=1}^{N} \log p_\theta(\xv^{(i)})
\eeq
which is non-convex and often intractable for complex generative models, as it involves marginalization over the latent variables $\zv$. 

We are especially interested in unsupervised feature learning applications, where by maximizing (\ref{maxlik}) we hope to discover a meaningful representation for the data $\xv$ in terms of latent features given by $p_\theta(\zv|\xv)$.

\subsection{Variational Autoencoders}
A popular solution \citep{autoencoding_variational_bayes2013, variational_dbn_stochastic_bp2014} for optimizing the intractable marginal likelihood (\ref{maxlik}) is to optimize the evidence lower bound (ELBO) by introducing an inference model $q_\phi(\zv \lvert \xv)$ parametrized by $\phi$ \footnote{We omit the dependency on $\theta$ and $\phi$ for the remainder of the paper.}:
\begin{align*}
%\rbx[0.9\hsize]{
\log p(\xv) &\geq \bb{E}_{q(\zv \lvert \xv)} [\log p(\xv, \zv) - \log q(\zv \lvert \xv)] \\ 
&= \bb{E}_{q(\zv | \xv)} [\log p(\xv | \zv)] - \bb{KL}(q(\zv \lvert \xv) \lVert p(\zv)) \\
&= \mc{L}(\xv; \theta, \phi) \numberthis \label{eq:vae-obj}
%}
\end{align*}
where $\bb{KL}$ is the Kullback-Leibler divergence.
%\stefano{kl undefined}

%   \aditya{add $\theta$, $\phi$ as subscripts in all the above equations}
%To sample from $p(\xv, \zv)$, $\zv$ is first sampled from the prior $p(\zv)$, and then transformed into $\xv$ by sampling from the conditional $p(\xv| \zv)$. Hence, it is intuitively desirable to have the posterior $q(\zv \lvert \xv)$ match $p(\zv)$, which is highlighted by the following  equivalent formulation of $\mc{L}(\xv; \theta, \phi)$:
%\beq
%\mc{L}(\xv; \theta, \phi) = \bb{E}_{q(\zv | \xv)} [\log p(\xv | \zv)] - \bb{KL}(q(\zv \lvert \xv) \lVert p(\zv))
%\label{eq:vae-obj}
%\eeq
%where the first term on the right hand side is a \textit{reconstruction loss} for $\xv$.

%\stefano{is this unnencessary?}
%\begin{align*}
%\mathcal{L}_1 = E_{q_\phi(x, \zv)}[\log p_\theta(x|\zv)] - D_1(q_\phi(\zv)||p_\thetav(\zv)) \numberthis \label{equ:single_layer_vae_noreg}
%\end{align*}

\subsection{Hierarchical Variational Autoencoders}
A hierarchical VAE (HVAE) can be thought of as a series of VAEs stacked on top of each other. It has the following hierarchy of latent variables $\zv = \{\zv_1, \ldots, \zv_L\}$, in addition to the observed variables $\xv$. We use the notation convention that $\zv_1$ represents the lowest layer closest to $\xv$ and $\zv_L$ the top layer.
%\stefano{where $\ell$ indicates depth? which one is shallow/deep}. 
Using chain rule, the joint distribution $p(\xv, \zv_1, \ldots, \zv_L)$ can be factored as follows
\beq
p(\xv, \zv_1, \ldots, \zv_L) = p(\xv \lvert \zv_{>0}) \prod_{\ell=1}^{L-1} p(\zv_\ell \lvert \zv_{>\ell}) p(\zv_{L}) \label{equ:hvae_generator}
\eeq
where $\zv_{>\ell}$ indicates $(\zv_{\ell+1}, \cdots, \zv_L)$, and $\zv_{>0} = \zv = (\zv_1, \ldots, \zv_L)$. Note that this factorization via chain-rule
%only assume the natural auto-regressive structure and 
is fully general. In particular it accounts for recent models that use shortcut connections~\citep{vae_autoregressive_flow2016, matnet_variational_network2016}, where each hidden layer $\zv_\ell$ directly depends on all layers above it ($\zv_{>\ell}$). We shall refer to this fully general formulation as autoregressive HVAE.

Several models assume a Markov independence structure on the hidden variables, leading to the following simpler factorization
%use a more traditional architecture, where $p$ takes a Markov form
\citep{variational_dbn_stochastic_bp2014, pixel_vae2016, ladder_variational_network2015}
\begin{align*}
p(\xv, \zv) = p(\xv|\zv_\ell) \prod_{l=1}^{L-1} p(\zv_\ell|\zv_{\ell+1}) p(\zv_L) \numberthis \label{equ:factorized_hvae}
\end{align*}
We refer to this common but more restrictive formulation as Markov HVAE. 

For the inference distribution $q(\zv|\xv)$ we do not assume any factorized structure to account for complex inference techniques used in recent work
%\aditya{what 'advances'? describe a bit more in detail} 
\citep{ladder_variational_network2015, matnet_variational_network2016}. We also denote $q(\xv, \zv) = p_{data}(\xv) q(\zv|\xv)$.
%If we assume $q(\zv|\xv)$ is also hierarchical: 
%\beq
%q(\zv \lvert \xv) = q(\zv_1 | \xv)\prod_{\ell=1}^{L-1} q(\zv_{\ell+1} | \zv_\ell) \\, \label{eq:hvae-inference}
%\eeq
%the objective would then become

%\stefano{$p_{data}(\xv)$ is undefined. define it somewhere above near eq 1}
% \begin{adjustbox}{minipage=1.1\linewidth,scale=0.9}
% \vspace{-1em}
% \begin{gather}
% 	\bb{E}_{q(\zv \lvert \xv)}[\log p(\zv_L) + \sum_{\ell=1}^{L} \log p(\zv_{\ell-1} | \zv_{\ell}) - \log q(\zv \lvert \xv)] \nonumber \\
% 	= \bb{E}_{q(\zv \lvert \xv)} [\log p(\xv \lvert \zv_1)] - \sum_{\ell=1}^{L} \bb{KL}(q(\zv_\ell | \zv_{\ell-1}) \lVert p(\zv_\ell | \zv_{\ell+1})) \label{eq:hvae-obj}
% \end{gather}
% \end{adjustbox}

%\stefano{for completeness, this should be connected with the previous elbo expression}
Both $p(\xv|\zv)$ and $q(\zv|\xv)$ are jointly optimized, as before in Equation (\ref{eq:vae-obj}), to maximize the ELBO objective
\begin{align*}
\mathcal{L}_{ELBO} &= E_{p_{data}(\xv)} E_{q(\zv|\xv)}[\log p(\xv|\zv)] - \\ 
& \qquad \qquad E_{p_{data}(\xv)}[\bb{KL}(q(\zv|\xv)||p(\zv))] \\ 
%&= E_{q(\zv, \xv)}\left[\log \frac{\prod_{l=0}^{L} p(\zv_l|\zv_{>l})}{q(\zv|x)}\right] \\ 
%&= \bb{E}_{q(\zv, \xv)} [\log p(\xv \lvert \zv)] - \sum_{\ell=1}^{L} KL(q(\zv_{>\ell} | \zv_{\ell-1}) \lVert p(\zv_\ell | \zv_{\ell+1})) \numberthis \label{eq:hvae-obj} \\
&=\sum_{l=0}^{L} E_{q(\zv, \xv)}[\log p(\zv_l|\zv_{>l})] + H(q(\zv|\xv))  \numberthis \label{equ:hvae_criteria}
\end{align*}
where we define $\zv_0 \equiv \xv$, $\zv_{L+1} \equiv {\bf 0}$, and $H$ the entropy of a distribution, and expectation over $p_{data}(x)$ is estimated by the samples in the dataset. This can be interpreted as stacking VAEs on top of each other.

%\stefano{entropy undefined. kl notation changed from above}

% For Markov HVAE this can be written as 
% \begin{align*}
% \mathcal{L}_{ELBO} &= \sum_{l=0}^{L} E_{q(\zv, \xv)}[\log p(\zv_l|\zv_{l+1})] + H(q(\zv|\xv))
% \end{align*}
%&= \bb{E}_{q(\zv, \xv)} [\log p(\xv \lvert \zv)] - \sum_{\ell=1}^{L} KL(q(\zv_{>\ell} | \zv_{\ell-1}) \lVert p(\zv_\ell | \zv_{\ell+1})) \numberthis \label{eq:hvae-obj} \\

\section{Limitations of Hierarchical VAEs}
\label{sec:limitations}
\subsection{Representational Efficiency}
One of the main reasons deep hierarchical networks are widely used as function approximators is their representational power. It is well known that certain functions can be represented much more compactly with deep networks, requiring exponentially less parameters compared to shallow networks~\citep{bengio2009learning}. However, we show that under ideal optimization of $\mathcal{L}_{ELBO}$, HVAE models do not lead to improved representational power. This is because for a well trained HVAE, a Gibbs chain on the bottom layer, which is a single layer model, can be used to recover $p_{data}(\xv)$ exactly. 

We first show this formally for Markov HVAE with the following proposition
\begin{prop}
\label{prop:hvae_redundency_markov}
$\mathcal{L}_{ELBO}$ in Eq.(\ref{equ:hvae_criteria}) is globally maximized as a function of $q(\zv|\xv)$ and $p(\xv|\zv)$ when $\mathcal{L}_{ELBO} = -H(p_{data}(x))$. If $\mathcal{L}_{ELBO}$ is globally maximized for a Markov HVAE, the following Gibbs sampling chain converges to $p_{data}(\xv)$ if it is ergodic
\begin{align*}
\zv_1^{(t)} &\sim q(\zv_1|\xv^{(t)}) \\  
\xv^{(t+1)} &\sim p(\xv|\zv_1^{(t)}) \numberthis \label{equ:gibbs_chain_factorized}
\end{align*}
\end{prop}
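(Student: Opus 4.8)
The plan is to handle the two claims in turn: first pin down the value of the global optimum of $\mathcal{L}_{ELBO}$ together with the corresponding optimal $p$ and $q$, and then recognize the chain in \eqref{equ:gibbs_chain_factorized} as a two-block Gibbs sampler whose invariant $\xv$-marginal is $p_{data}$.

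For the first claim, I would re-express $\mathcal{L}_{ELBO}$ from \eqref{equ:hvae_criteria} by writing $\log p(\xv,\zv) = \log p(\xv) + \log p(\zv|\xv)$ inside the expectation and splitting off $\log p_{data}(\xv)$. This yields the standard decomposition
\begin{align*}
\mathcal{L}_{ELBO} = -H(p_{data}(\xv)) &- \bb{KL}(p_{data}(\xv)\,\|\,p(\xv)) \\
&- \bb{E}_{p_{data}(\xv)}\!\left[\bb{KL}(q(\zv|\xv)\,\|\,p(\zv|\xv))\right].
\end{align*}
Both Kullback--Leibler terms are nonnegative, so $\mathcal{L}_{ELBO} \le -H(p_{data}(\xv))$, and (assuming the variational families are expressive enough to realize the optimum, and mild absolute-continuity conditions so the two divergences are finite) equality holds precisely when $p(\xv) = p_{data}(\xv)$ and $q(\zv|\xv) = p(\zv|\xv)$ for $p_{data}$-almost every $\xv$. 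This is exactly what "$\mathcal{L}_{ELBO}$ is globally maximized" will mean for the second part.

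For the second claim, I would first marginalize the identity $q(\zv|\xv) = p(\zv|\xv)$ over $\zv_{>1} = (\zv_2,\dots,\zv_L)$ to get $q(\zv_1|\xv) = p(\zv_1|\xv)$, and then combine it with $p(\xv) = p_{data}(\xv)$ to conclude that the two bottom-layer joints coincide:
\[
q(\xv,\zv_1) := p_{data}(\xv)\,q(\zv_1|\xv) = p(\xv)\,p(\zv_1|\xv) = p(\xv,\zv_1).
\]
In particular $p(\xv|\zv_1) = q(\xv,\zv_1)/q(\zv_1)$ is the matching conditional, and the aggregated posterior satisfies $q(\zv_1) := \bb{E}_{p_{data}(\xv)}[q(\zv_1|\xv)] = p(\zv_1)$. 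Hence the chain \eqref{equ:gibbs_chain_factorized}, which alternates $\zv_1 \sim q(\zv_1|\xv)$ and $\xv \sim p(\xv|\zv_1)$, is the systematic-scan Gibbs sampler for the joint $q(\xv,\zv_1) = p(\xv,\zv_1)$; equivalently, its induced marginal chain on $\xv$ has transition kernel $K(\xv \to \xv') = \int p(\xv'|\zv_1)\,q(\zv_1|\xv)\,d\zv_1$. A one-line computation, using $q(\zv_1) = p(\zv_1)$ and then $\int p(\xv'|\zv_1)\,p(\zv_1)\,d\zv_1 = p(\xv') = p_{data}(\xv')$, shows that $p_{data}$ is invariant under $K$. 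Since the chain is assumed ergodic it has a unique stationary distribution, which is therefore $p_{data}(\xv)$, and the chain converges to it.

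The one genuinely delicate step is the equality analysis for $\mathcal{L}_{ELBO} = -H(p_{data})$: concluding that both divergence terms vanish requires them to be individually finite and the density manipulations to be valid, which I would guarantee with mild regularity assumptions (e.g.\ common support). The remaining steps --- the marginalization identity, the Gibbs-invariance calculation, and invoking ergodicity for uniqueness of the stationary law --- are routine. It is worth noting that the Markov factorization \eqref{equ:factorized_hvae} is used only insofar as $p(\xv|\zv_1)$ is a directly parametrized component of the model; the same argument would go through for a general autoregressive HVAE, but there $p(\xv|\zv_1)$ would require marginalizing out $\zv_{>1}$, whereas under the Markov assumption the chain in \eqref{equ:gibbs_chain_factorized} is literally a single-layer model that never touches $\zv_2,\dots,\zv_L$.
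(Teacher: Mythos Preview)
Your proposal is correct and follows essentially the same route as the paper: the identical ELBO decomposition into $-H(p_{data})$ minus two KL terms, the same optimality conditions $p(\xv)=p_{data}(\xv)$ and $q(\zv|\xv)=p(\zv|\xv)$, and the identification of \eqref{equ:gibbs_chain_factorized} as a Gibbs sampler for the joint $q(\xv,\zv_1)=p(\xv,\zv_1)$. Your treatment is slightly more explicit than the paper's in that you spell out the marginalization over $\zv_{>1}$, verify invariance of $p_{data}$ under the transition kernel directly, and flag where the Markov assumption actually enters (namely so that the model component $p(\xv|\zv_1)$ coincides with the true conditional without integrating out $\zv_{>1}$); the paper simply asserts the $q$-chain is a valid Gibbs sampler and swaps in $p(\xv|\zv_1)$.
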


%In fact, it represents distributions less efficiently compared to a single layer model.
%\stefano{restate an prove in terms of the factored model below (5)}

\begin{proof}[Proof of Proposition~\ref{prop:hvae_redundency_markov}]
We notice that 
\begin{align*}
& \mathcal{L}_{ELBO} = E_{p_{data}(\xv) q(\zv|\xv)} \left[ \log \frac{p(\xv, \zv)}{q(\zv|\xv)} \right] \\ 
&\qquad = E_{p_{data}(\xv)q(\zv|\xv)}\left[ \log \frac{p(\zv|\xv)}{q(\zv|\xv)} \right] + E_{p_{data}(\xv)} [\log p(\xv)] \\
& \qquad = -E_{p_{data}(\xv)}[\bb{KL}(q(\zv|\xv)||p(\zv|\xv))] \\ 
& \qquad  \qquad - \bb{KL}(p_{data}(\xv)||p(\xv)) - H(p_{data}(\xv))
\end{align*}
%\stefano{add some derivation of this}
By non-negativity of KL-divergence, and the fact that KL divergence is zero if an only if the two distributions are identical, it can be seen that this is uniquely optimized when $p(\xv) = \int_\zv p(\xv, \zv) d\zv = p_{data}(\xv)$ and $\forall \xv, q(\zv|\xv) = p(\zv|\xv)$ and the optimum is
\[ \mathcal{L}^*_{ELBO} = -H(p_{data}(\xv)) \]
This also implies that $\forall \xv$
\begin{align*}
q(\xv|\zv_1) = \frac{q(\zv_1|\xv)p_{data}(\xv)}{q(\zv_1)} = p(\xv|\zv_{1})\numberthis \label{equ:factorized_hvae2}
\end{align*}
Because the following Gibbs chain converges to $p_{data}(\xv)$ when it is ergodic
\begin{align*}
\zv_1^{(t)} &\sim q(\zv_1|\xv^{(t)}) \\ 
\xv^{(t+1)} &\sim q(\xv|\zv_1^{(t)}) 
\end{align*}
We can replace 
$q(\xv|\zv_1^{(t)})$ with $p(\xv|\zv_1^{(t)})$ using (\ref{equ:factorized_hvae2}) and the chain still converges to $p_{data}(\xv)$.
\end{proof}
%This means that recursive hierarchies actually use network capacity less efficiently than a single layer model. 
Therefore under the assumptions of Proposition 1 we can sample from $p_{data}(\xv)$ without using the latent code $(\zv_2, \cdots, \zv_L)$ at all. Hence, optimization of the $\mathcal{L}_{ELBO}$ objective and efficient representation are conflicting, in the sense that optimality implies some level of redundancy in the representation. 

We demonstrate that this phenomenon occurs in practice, even though the conditions of Proposition 1 might not be met exactly. We train a factorized three layer VAE in Equation (\ref{equ:factorized_hvae}) on MNIST by optimizing the ELBO criteria Equation (\ref{equ:hvae_criteria}). We use a model where each conditional distribution is factorized Gaussian $p(\zv_\ell|\zv_{\ell+1}) = \mathcal{N}(\mu_\ell(\zv_{\ell+1}), \sigma_\ell(\zv_{\ell+1}))$ where $\mu_\ell$ and $\sigma_\ell$ are deep networks. %where \stefano{give more details on architecture used here}, 
We compare: the samples generated by the Gibbs chain in Equation (\ref{equ:gibbs_chain_factorized}) with samples generated by ancestral sampling with the entire model in Figure~\ref{fig:recursive_mc}. We observe that the Gibbs chain generates samples (left panel) with similar visual quality as ancestral sampling with the entire model (right panel), even though {\em the Gibbs chain only used the bottom layer of the model}. % \stefano{discuss the results shown in the figure}

This problem can be generalized to autoregressive HVAEs. %\aditya{can you prove this?}
% , as we show in the following proposition
One can sample from $p_{data}(\xv)$ without using $p(\zv_\ell|\zv_{>\ell}), 1 \leq \ell < L$ at all. We prove this in the Appendix.
%All the dependency we modeled in the high layers did not give us any advantage in modeling the data distribution. 
  %This breaks the hope that recursive hierarchies might give a more efficient representation of data compared to a single layer model. We have represented the distribution without using higher layer latent codes at all.

\subsection{Feature learning}
Another significant advantage of hierarchical models for supervised learning is that they learn rich and disentangled hierarchies of features. This has been demonstrated for example using various visualization techniques \citep{visualize_cnn2014}. However, we show in this section that typical HVAEs do not enjoy this property. %This section extends the arguments in \citep{zhao2017vae} to the hierarchical case.

Recall that we think of $p(\zv|\xv)$ as a (probabilistic) feature detector, and $q(\zv|\xv)$ as an approximation to $p(\zv|\xv)$. It might therefore be natural to think that $q$ might learn hierarchical features similarly to a feed-forward network $x \to \zv_\ell \to \cdots \to \zv_L$, where higher layers correspond to higher level features that become increasingly abstract and invariant to nuisance variations. However if $q(\zv_{>\ell}|\zv_\ell)$ maps low level features to high level features, then the reverse mapping $q(\zv_\ell|\zv_{>\ell})$ maps high level features to likely low level sub-features. For example, if $\zv_L$ correspond to object classes, then $q(\zv_{L-1}|\zv_L)$ could represent the distribution over object subparts given the object class.  
%The difficulty is that the generative model used to define the HVAE poses some restrictions as to what this distribution can be. \stefano{not really true? q can be anything in principle. it's more an issue with the learning objective}

%\stefano{i don't quite like the next two paragraphs. this entire section  feels more handwavy that it can be}
Suppose we train $\mathcal{L}_{ELBO}$ in Equation (\ref{equ:hvae_criteria}) to optimality, we would have
\[ p(\xv) = p_{data}(\xv), q(\zv|\xv) = p(\zv|\xv) \]
Recall that
\begin{align*} 
q(\xv, \zv) &:= p_{data}(\xv)q(\zv|\xv)  \\ 
p(\xv, \zv) &:= p(\zv)p(\xv|\zv) = p(\xv) p(\zv|\xv) 
\end{align*}
Comparing the two we see that
\[ p(\xv, \zv) = q(\xv, \zv) \]
if the joint distributions are identical, then any conditional distribution would also be identical, which implies that for any $\zv_{>\ell}$, $ q(\zv_\ell|\zv_{>\ell}) = p(\zv_\ell|\zv_{>\ell}) $. 
%and that we are able to match the data distribution exactly. If we have a sufficiently expressive family of inference distributions $q(\zv|\xv)$, 
%which implies $q(\zv|\xv) =p(\zv|\xv)$, 
%i.e., the approximate posterior equals the true posterior. Along with $p(\xv) = p_{data}(\xv)$, we have $p(\xv, \zv) = q(\xv, \zv)$
%As previously observed in \citep{zhao17vae} for the case of single layer VAEs, this 

% \begin{prop}
% \label{prop:hvae_posterior}
% If we train $\mathcal{L}_{ELBO}$ in Equation (\ref{equ:hvae_criteria}) to optimality for sufficiently rich distribution families $q$, and if $p(\xv, \zv)$ has marginal $p_{data}(\xv)$. Then for all $0 \leq \ell < L$, we have for all $\zv_{>\ell}$
% \[ p(\zv_\ell|\zv_{>\ell}) = q(\zv_\ell|\zv_{>\ell}) \]
% \end{prop}
% \begin{proof}[Proof of Proposition~\ref{prop:hvae_posterior}]
% We can rewrite the ELBO bound as
% \[ \mathcal{L}_{ELBO} = E_{p_{data}((\xv)}[\log p(\xv)] - KL(q(\zv|\xv)||p(\zv|\xv)) \]
% which is optimized if $q(\zv|\xv) = p(\zv|\xv)$, but since $p(\xv) = p_{data}(\xv) = q(\xv)$, then $q(\xv, \zv) = p(\xv, \zv)$, which gives us equality of all conditionals.
% \end{proof}

For the majority of models the conditional distributions $p(\zv_\ell|\zv_{>\ell})$ belong to a very simple distribution family such as parameterized Gaussians \citep{autoencoding_variational_bayes2013} \citep{variational_dbn_stochastic_bp2014} \citep{ladder_variational_network2015} \citep{vae_autoregressive_flow2016}. Therefore for a perfectly optimized $\mathcal{L}_{ELBO}$ in the Gaussian case, the only type of feature hierarchy we can hope to learn is one under which $q(\zv_\ell|\zv_{>\ell})$ is also Gaussian. This limits the hierarchical representation we can learn. In fact, the hierarchies we observe for feed-forward models \citep{visualize_cnn2014} require complex multimodal distributions to be captured. For example, the distribution over object subparts for an object category is unlikely to be unimodal and cannot be well approximated with a Gaussian distribution. 

More generally, as shown in \citep{zhao2017vae}, even when $\mathcal{L}_{ELBO}$ is not globally optimized, optimizing $\mathcal{L}_{ELBO}$ encourages $q(\zv_\ell|\zv_{>\ell})$ and $p(\zv_\ell|\zv_{>\ell})$ to match. Because $p(\zv_\ell|\zv_{>\ell})$ belong to some distribution family, such as Gaussians. This encourages $q(\zv_\ell|\zv_{>\ell})$ to belong to that distribution family as well.
%should be fairly complex, or the distribution over edge positions for given detected shapes. The model cannot capture these relationships with the Gaussian constraint.
%\stefano{maybe we should say that even if there is no equality, the elbo objective is still doing m-projections. basically, convice people that the argument is robust and to some extent hold even when thm conditions are not met}

We experimentally demonstrate these intuitions in Figure~\ref{fig:hvae_hierarchy}, where we train a three layer Markov HVAE with factorized Gaussian conditionals $p(\zv_\ell|\zv_{\ell+1})$ on MNIST and SVHN. Details about the experimental setup are explained in the Appendix. 
%\stefano{add details on what is trained. what objective? on what data? say more details are provided in the appendix. actually add the experimental details there. right now it's not present}
%\stefano{add details}. 
As suggested in \citep{autoencoding_variational_bayes2013}, we reparameterize the stochasticity in $p(\zv_\ell|\zv_{\ell+1})$ using a separate noise variable $\epsv_\ell \sim \mathcal{N}(0, I)$, and implicitly rewrite the original conditional distribution as 
\[ \zv_\ell = \mu_\ell(\zv_{\ell+1}) + \sigma_\ell(\zv_{\ell+1}) \odot \epsv_\ell \]
where $\odot$ indicates element-wise product. We fix the value of $\epsv_k$ to a random sample from $\mathcal{N}(0, I)$ at all layers $k=1, \cdots, \ell-1, \ell+1, \cdots, L$ except for one, and observe the variations in $\xv$ generated by randomly sampling $\epsv_\ell$. We observe in Figure~\ref{fig:hvae_hierarchy} that only very minor variations correspond to lower layers (Left and center panels), and almost all the variation is represented by the top layer (Right panel). %\stefano{explain exactly what you are doing. fixing some parts, and sampling from others?} 
More importantly, no notable hierarchical relationship between features is observed. 
%M-projections of $p$ onto $q$. if $p$ is Gaussian, it will learn Gaussians, and intutively it's unlikely to work
%\section{Flat VAE}

%\stefano{i would cut this paragraph. does not make much sense on its own}
%On the other hand, if we use a complex conditional distribution $p(\zv_\ell|\zv_{>\ell})$, then \citet{lossy_vae2016} show that optimization of $\mathcal{L}_{ELBO}$ encourages the model to ignore latent variables altogether. It is in general not known how to select conditional distribution families $p(\zv_\ell|\zv_{>\ell})$ which lead to meaningful feature hierarchies, even though promising directions have been proposed \citep{zhao2017vae}.
%$q$ can learn meaningful feature hierarchies.

\begin{figure*}
\centering
\begin{subfigure}
\centering
\includegraphics[height=0.1\textwidth]{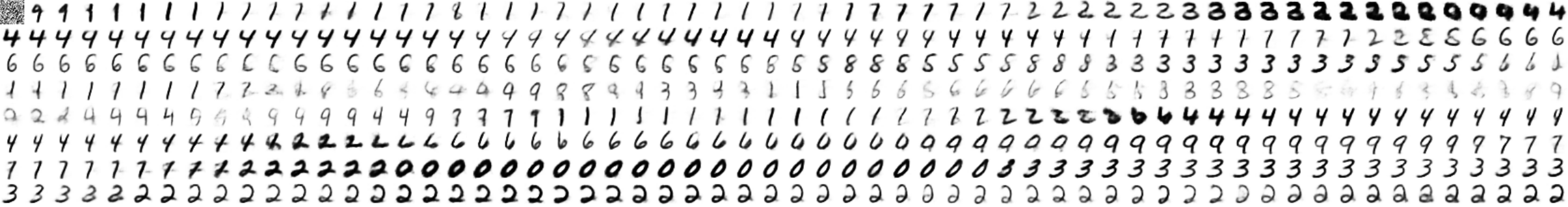}
\end{subfigure}
\begin{subfigure}
\centering
\includegraphics[height=0.1\textwidth]{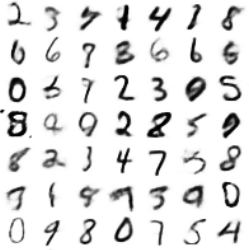}
\end{subfigure}
\caption{\textbf{Left:} Samples obtained by running the Gibbs sampling chain in Proposition 1, using only the bottom layer of a 3-layer recursive hierarchical VAE. \textbf{Right:} samples generated by ancestral sampling from the same model.
The quality of the samples is comparable, indicating that the bottom layer contains enough information to reconstruct the data distribution.
}
\label{fig:recursive_mc}
\end{figure*}

\begin{figure*}[h]
\centering
\begin{subfigure}
\centering
\includegraphics[trim=55 0 0 0, clip, width=0.9\linewidth]{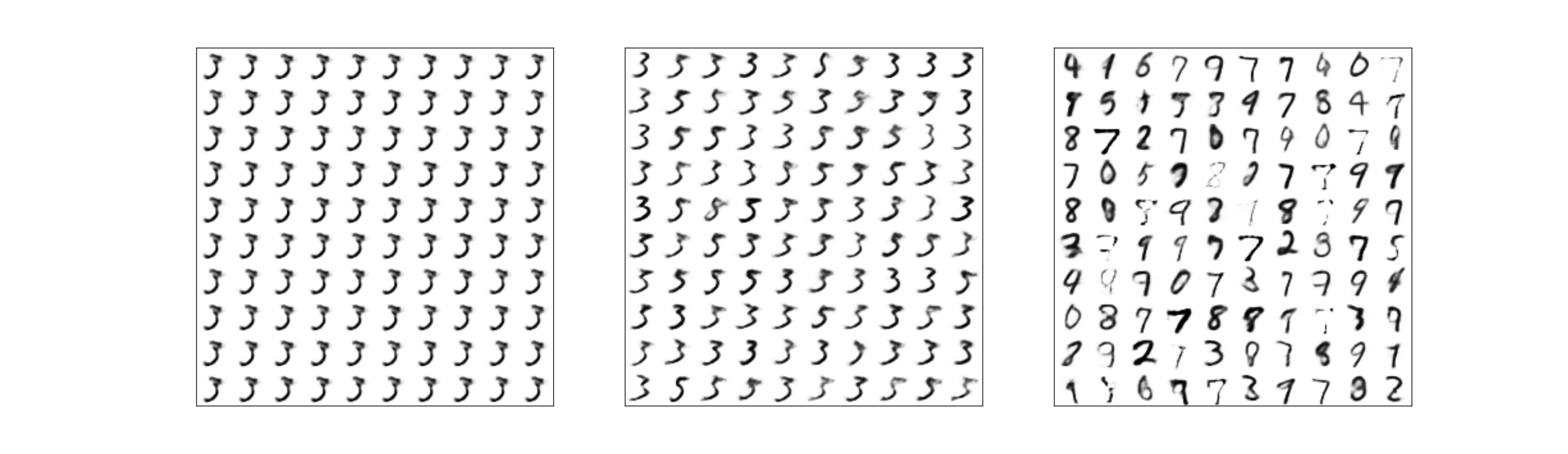}
\end{subfigure}
\begin{subfigure}
\centering
\includegraphics[width=0.75\linewidth]{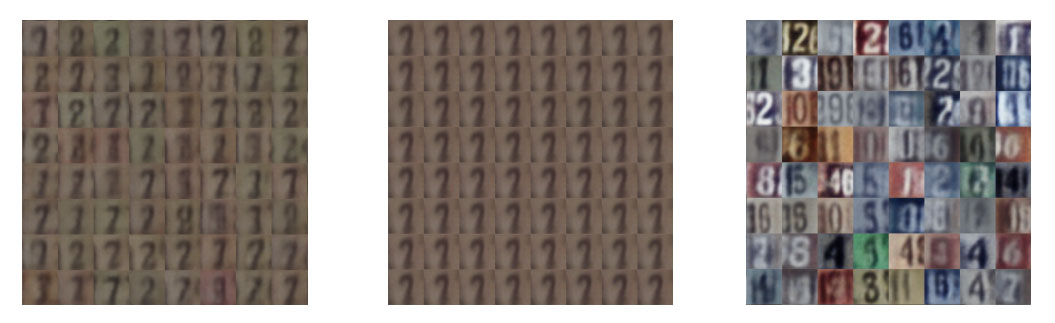}
\end{subfigure}
\caption{A hierarchical three layer VAE with Gaussian conditional distributions $p(\zv_l|\zv_{l+1})$ does not learn a meaningful feature hierarchy on MNIST and SVHN when trained with the ELBO objective.
\textbf{Left panel:} Samples generated by sampling noise $\epsv_1$ at the bottom layer, while holding $\epsv_2$ and $\epsv_3$ constant.
\textbf{Center panel:} Samples generated by sampling noise $\epsv_2$ at the middle layer, while holding $\epsv_1$ and $\epsv_3$ constant.
\textbf{Right panel:} Samples generated by sampling noise $\epsv_3$ at the top layer, while holding $\epsv_1$ and $\epsv_2$ constant.
%Plotted are the samples generated when we fix reparameterized noise variable for all layers except one. From left to right the randomly sampled layer go from bottom layer to top layer. 
For both MNIST and SVHN we observe that the top layer represents essentially all the variation in the data (right panel), leaving only very minor local variations for the lower layers (left and center panels). Compare this with the rich hierarchy learned by our VLAE model, shown in Figures~\ref{fig:mnist_ladder_separate_feature} and \ref{fig:ladder_vae_svhn2}.}
\label{fig:hvae_hierarchy}
\end{figure*}

\section{Variational Ladder Autoencoders}
\label{sec:vlae}
% given that we don't know how to choose conditional $p$ in a way that captures the invariance of features, e.g., in natural images, we explore a flat architecture.
Given the limitations of hierarchical architectures described in the previous section, we focus on an alternative approach to learn a hierarchy of disentangled features.

%The recursive approach learns $q(z_{>l})p(x, z_{\leq l}|z_{>l})$ and stacks hierarchical layers to approximate the possibly intractable $q(z_{>l})$. However this is not the only way a hierarchy can be learned. 
%Given that we have little knowledge over how to select a suitable conditional $p$ in a way that captures the invariance of features, such as in natural images, learning structured representations through a hierarchical variational autoencoder is difficult. 

Our approach is to define a simple distribution with no hierarchical structure over the latent variables $p(\zv) = p(\zv_1, \cdots, \zv_L)$. For example, the joint distribution $p(\zv)$ can be a white Gaussian. Instead we encourage the latent code $\zv_1, \cdots, \zv_L$ to learn features with different levels of abstraction by carefully choosing the mappings $p(\xv|\zv)$ and $q(\zv|\xv)$ between input $\xv$ and latent code $\zv$. Our approach is based on the following intuition: 

%\stefano{explain this better. is the mapping $p$ or $q$, or both?}
%\textbf{Assumption:} More abstract features require more expressive inference and generative mapping between input and the features.

\textbf{Assumption:} If $\zv_i$ is more abstract than $\zv_j$, then the inference mapping $q(\zv_i|\xv)$ and generative mapping when other layers are fixed $p(\xv|\zv_i, \zv_{\neg i}=z^0_{\neg_i})$ requires a more expressive network to capture.

%\stefano{can you write it in a slightly more formal way? more complex family or more complex mapping?}

%\stefano{this paragraph is confusing. low-level could be depth or level of abstraction. try to rewrite and make it clearer }
%\stefano{what's confusing is the relationship among the features.}
%\stefano{try to be more specific. not just ``relate''}
This informal assumption suggests that we should use neural networks of different level of expressiveness to generate the corresponding features; the more abstract features require more expressive networks, and vice versa. We loosely quantify expressiveness with depth of the network. Based on these assumptions we are able to design an architecture that disentangles hierarchical features for many natural image datasets. 

%We note that this architecture is only one of the many possibilities in the flat approach, and that more complex dependencies could be learned if one selects the correct architecture.

% \subsection{Assumptions}
% \label{sec:assumptions}
% Our architecture is based on two assumptions: 
% \begin{enumerate}
% \item There are meaningful independent latent factors of variation.
% \item 
% \end{enumerate}

% The first assumption ensures that latent code can be effectively converted into subparts that have certain semantic meaning, which is the basis of learning hierarchical features. 
%, and low-level features are represented by shallower networks.

% \subsection{Sampling as Noise Injection}
% Before we delve into the details of our architecture, we see that the generative process of HVAE at every level is equivalent to: 
% \beq
% \zv_\ell = \mu(\zv_{\ell+1}) + \sigma(\zv_{\ell+1}) \cdot \ev_\ell
% \eeq
% where the first term of the right hand side is a deterministic mapping from $\zv_{t+1}$, and $\ev_\ell \sim \mc{N}(0, \Iv)$ on the second term is the injected noise, which is independent of the other variables. 

\subsection{Model Definition}

% \begin{figure}
% \centering
% \begin{subfigure}
% \centering
% \includegraphics[height=0.5\linewidth]{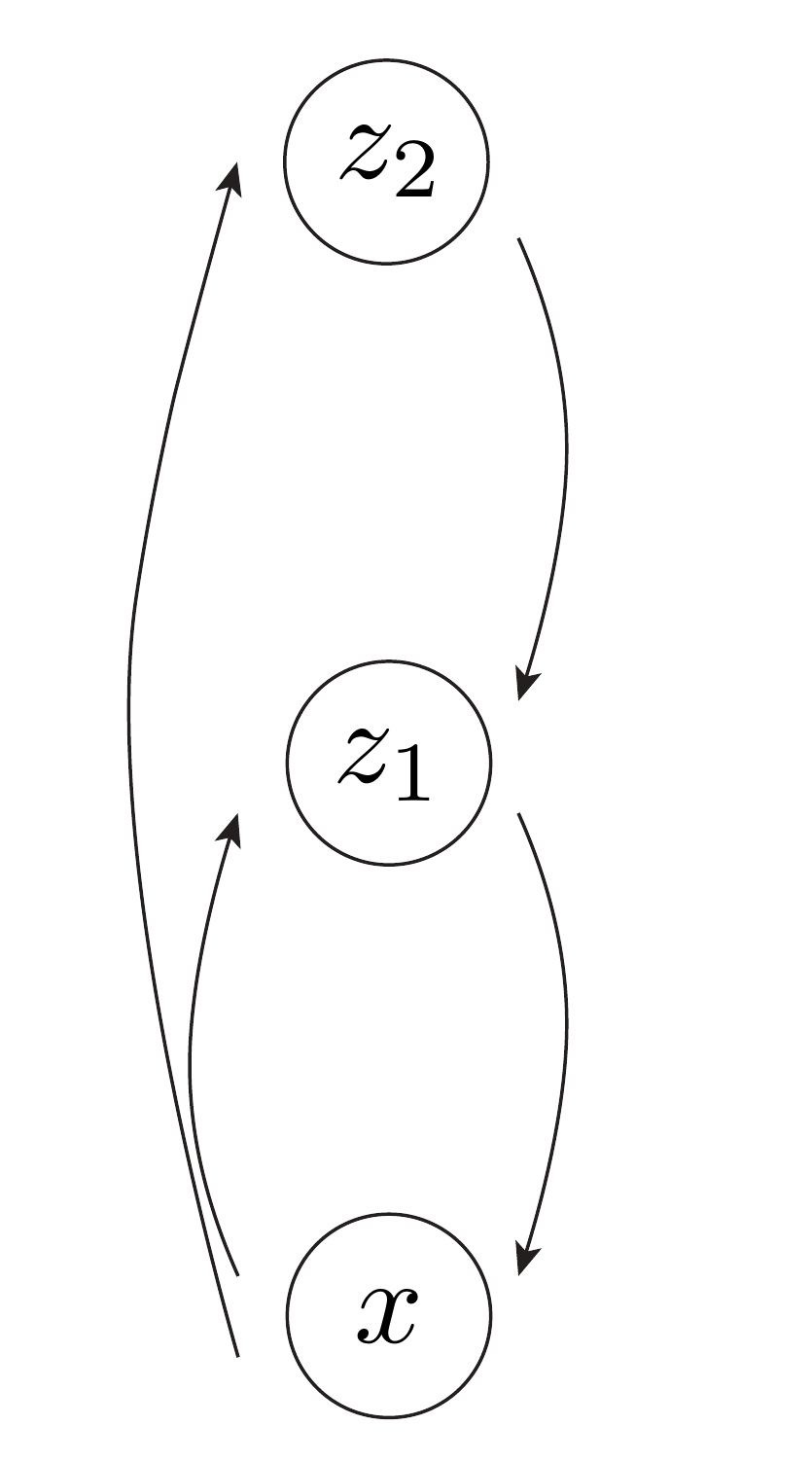}
% \end{subfigure}
% ~ \quad ~
% \begin{subfigure}
% \centering
% \includegraphics[height=0.5\linewidth]{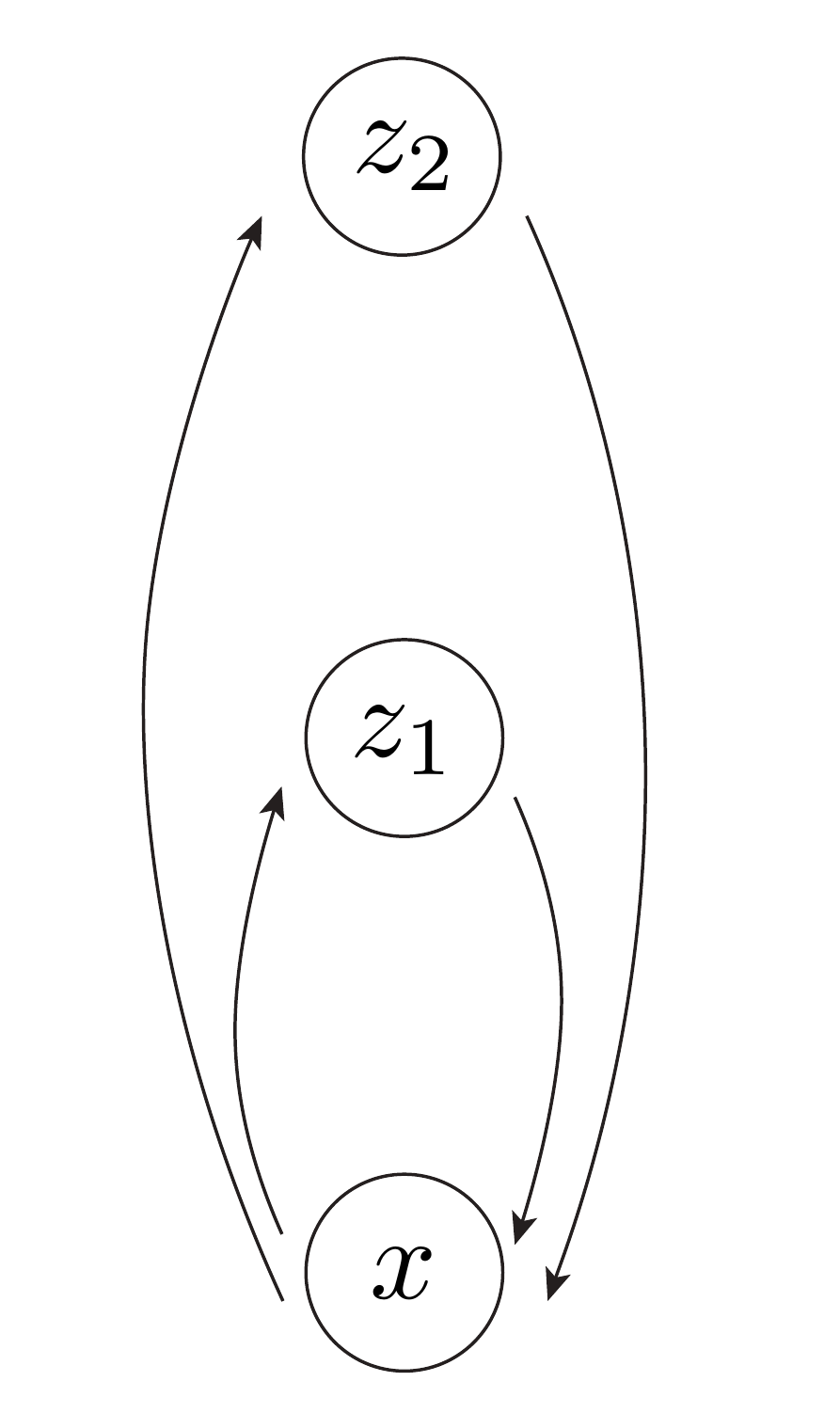}
% \end{subfigure}
% \caption{An illustration for HVAE (left) and VLAE (right). The most notable difference between these two methods is that HVAE assumes dependency between the latent variables, whereas VLAE does not. The independence property of VLAE allows learning disentangled latent variables at different levels.} %\aditya{isn't $z_1$ dep on $z_2$ in VLAE as per Eq.(8)?}
% \label{fig:hvae-vlae}
% \end{figure}

%We see that for HVAE, noise are injected at different levels of the neural network, whereas the higher level noise is propagated through a deeper network, and lower level noise is propagated through a shallower network. If we treat noise as our latent variables, this directly satisfies the two assumptions presented in Section \ref{sec:assumptions}. Therefore, we consider the following model as our generative model. 
%According to our assumption, any model that utilizes deep mappings for higher-level latent codes and shallow ones for lower-level latent codes would suffice. 

We decompose the latent code into subparts $\zv = \{\zv_1, \zv_2, \ldots \}$, where $\zv_1$ relates to $\xv$ with a shallow network, and increase network depth up to $\zv_L$, which relates to $\xv$ with a deep network. 
%Despite its simplicity this strategy is very effective in learning of disentangled latent hierarchies in our experiments.
In particular, we share parameters with a ladder-like architecture \citep{ladder_network2015, deconstructing_ladder2015}. Because of this similarity we denote this architecture as Variational Ladder Autoencoder (VLAE). Formally, our model, shown in Figure~\ref{fig:vael} is defined as follows

\textbf{1) Generative Network:} $p(\zv) = p(\zv_1, \cdots, \zv_L)$ is a simple prior on all latent variables. We choose it as a standard Gaussian $\mathcal{N}(0, I)$. The conditional distribution $p(\xv|\zv_1, \zv_2, \ldots, \zv_L)$ is defined implicitly as:
\eqs{
\tilde{\zv}_{L} &= \fv_L(\zv_L) \\
\tilde{\zv}_\ell &= \fv_\ell(\tilde{\zv}_{\ell+1}, \zv_\ell) \ \ \ \ell = 1, \cdots , L-1 \\
\xv &\sim \rv(\xv; \fv_0(\tilde{\zv}_1))  \label{eq:vlae_generator}
}
%\stefano{use boldface for the $f$}
%\stefano{what is the prior on $z$??}
%\stefano{also move pictures up on page 7 if possible}
%\stefano{i would spend another 30 min trying to make this description clearer. i doubt anybody would understand how the model works or be able to reimplement based on this descriptio}
%\stefano{use a different symbol for $p$?}
%\stefano{last equation seems wrong}
%\stefano{is this a procedure to sample from $x$, or to define the density??}
where $\fv_\ell$ is parametrized as a neural network, and $\tilde{\zv}_\ell$ is an auxiliary variable we use to simplify the notation. $\rv$ is a distribution family parameterized by $\fv_0(\tilde{\zv}_1)$. In our experiments we use the following choice for $\fv_\ell$:
\eqs{
	\tilde{\zv}_\ell &= \uv_\ell([\tilde{\zv}_{\ell+1}; \vv_\ell(\zv_{\ell})])
	%\tilde{\zv}_\ell &= o_\ell([\tilde{\zv}_{\ell+1}; \tilde{\zv_\ell}])
}
where $[\cdot; \cdot]$ denotes concatenation of two vectors, and $\vv_\ell, \uv_\ell$ are neural networks. We choose $\rv$ as a fixed variance factored Gaussian with mean given by $\muv_r = \fv_0(\tilde{\zv}_1)$.

\textbf{2) Inference Network:} For the inference network, we choose $q(\zv \lvert \xv)$ as
\eqs{
	\hv_\ell &= \gv_\ell(\hv_{\ell-1}) \\
	\zv_\ell &\sim \mathcal{N}(\muv_\ell(\hv_\ell), \sigmav_\ell(\hv_\ell))
}
where $\ell = 1, \cdots, L$, $\gv_\ell$, $\muv_\ell$, $\sigmav_\ell$ are neural networks, and $\hv_0 \equiv \xv$.
%\stefano{this is mixing up the distribution with the implicit definition..}
%\stefano{where does $x$ come in?}

\textbf{3) Learning:} For learning we use the ELBO criteria as in Equ.(\ref{eq:vae-obj}):
\beq
\rbx[0.88\hsize]{
\mc{L}(\xv) = \bb{E}_{q(\zv | \xv)} [\log p(\xv | \zv)] - \bb{KL}(q(\zv \lvert \xv) \lVert p(\zv))
} \label{eq:vael-obj}
\eeq
where $p(\zv) = \mc{N}(0, \Iv)$ denotes the prior for $\zv$. This is tractable if $\rv$ has tractable log likelihood, i.e. when $\rv$ is a Gaussian. 

This is essentially the inference and learning framework for a one-layer VAE; the hierarchy is only implicitly defined by the network architecture, therefore we call this model \textbf{flat hierarchy}. Motivated by our earlier theoretical results, we do not use additional layers of latent variables.

%\stefano{move the discussion on why it can be done efficiently here}

\subsection{Comparison with Ladder Variational Autoencoders}
\begin{figure}
\centering
\begin{subfigure}
\centering
\includegraphics[height=0.5\linewidth]{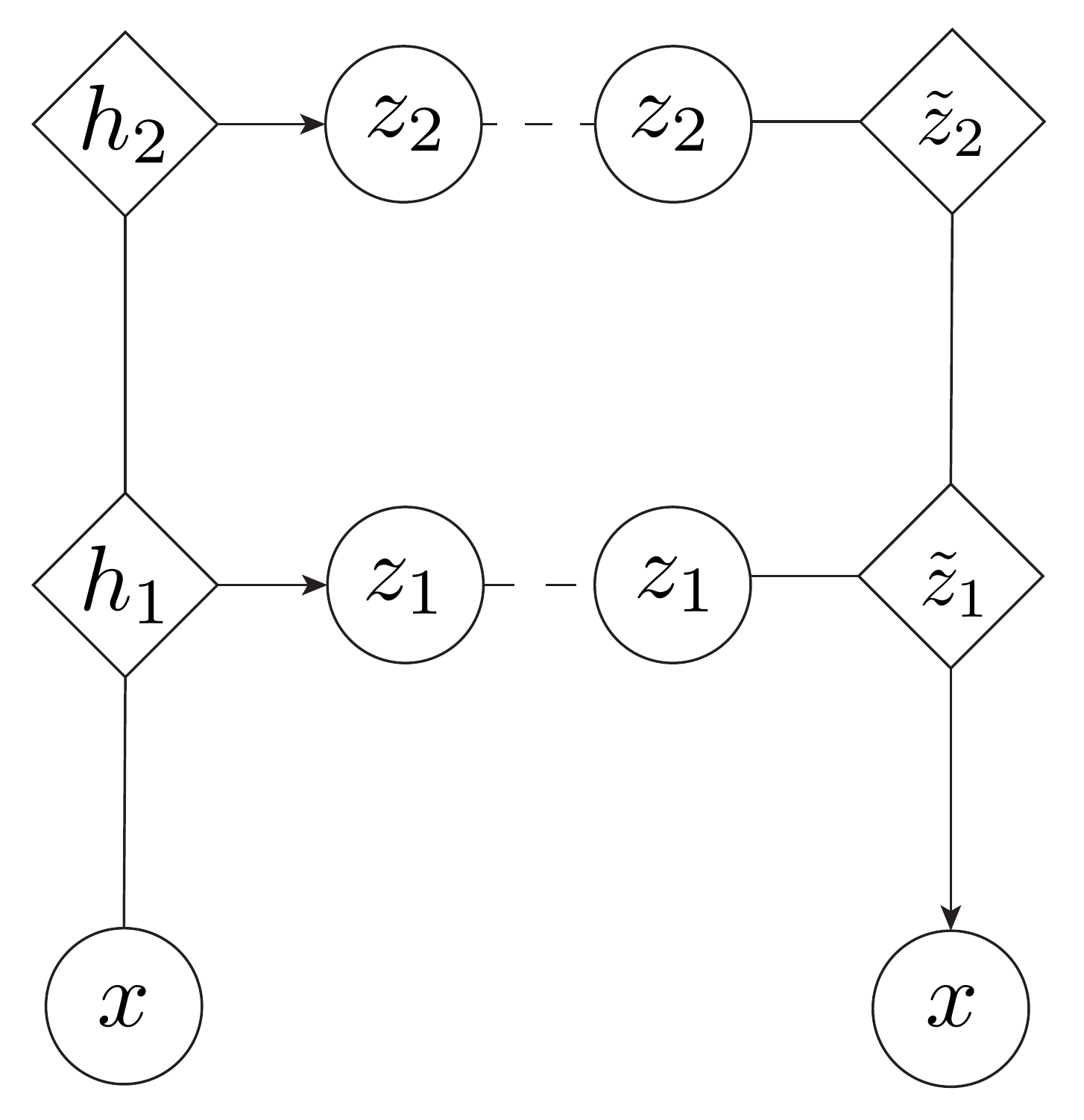}
\end{subfigure}
~
\begin{subfigure}
\centering
\includegraphics[height=0.5\linewidth]{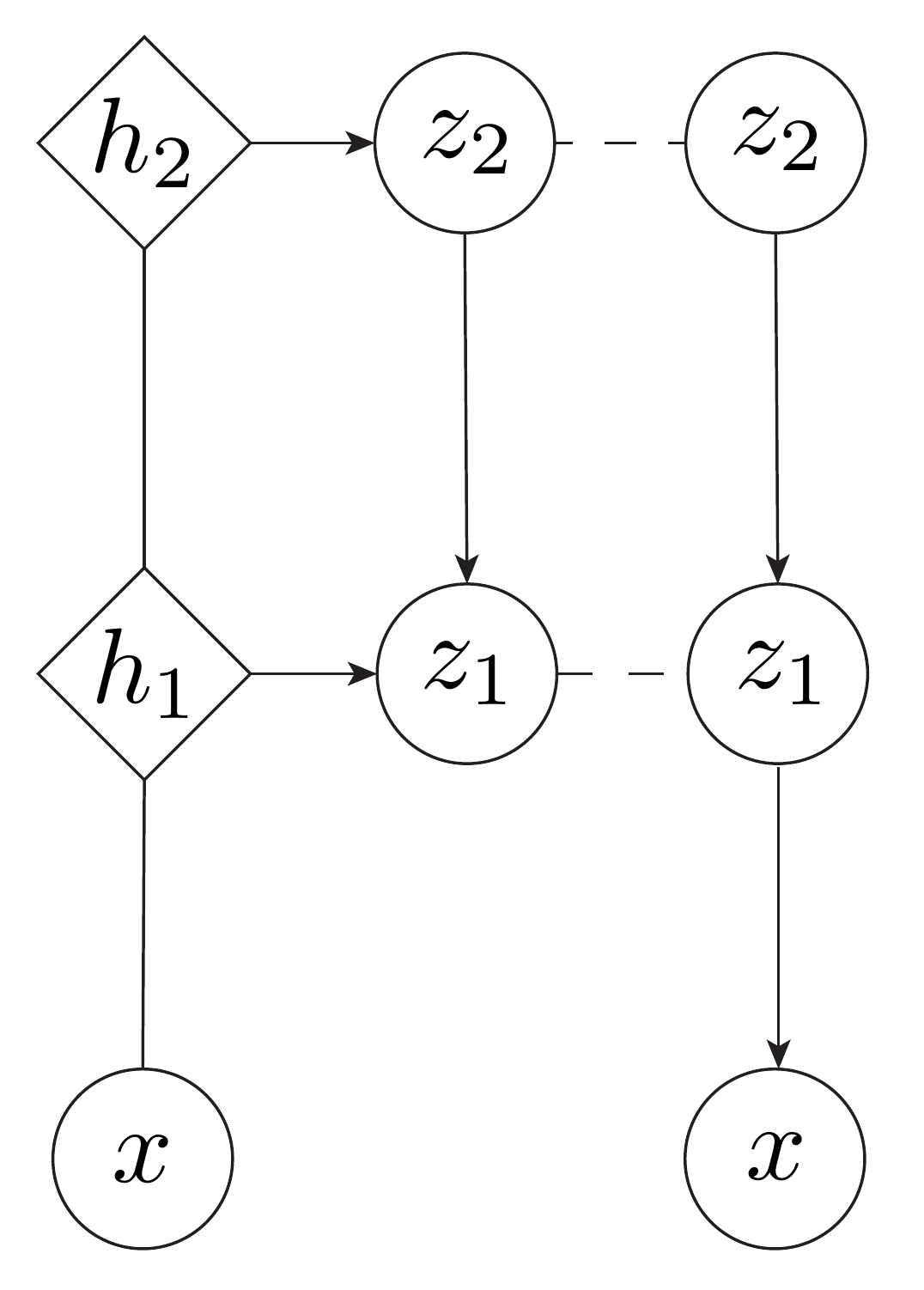}
\end{subfigure}
\caption{Inference and generative models for VLAE (left) and LVAE (right). Circles indicate stochastic nodes, and squares are deterministically computed nodes. Solid lines with arrows denote conditional probabilities; solid lines without arrows denote deterministic mappings; dash lines indicates regularization to match the prior $p(\zv)$. Note that in VLAE, we do not attempt to regularize the distance between $\hv$ and $\tilde{\zv}$. 
} %\stefano{explain dash vs solid line}

\label{fig:vael}
\end{figure}

%\stefano{this part should go in the learning section}

%The term ``ladder'' comes from the fact that our network utilizes ladder connections ~\citep{ladder_network2015}. The term ``variational'' comes from the fact that we match the inference distribution with the prior distribution, by minimizing KL divergence in the ELBO. Our architecture can be treated as a variational form of ladder autoencoders.

Our architecture resembles the ladder variational autoencoder (LVAE) \citep{sonderby2016ladder}. However the two models are very different. The purpose of our architecture is to connect subparts of the latent code with networks of different expressive power (depth); 
%under the same bandwidth for latent code, 
the model is encouraged to place high-level, complex features at the top, and low-level, simple features at the bottom, in order to reach lower reconstruction error with latent codes of the same capacity. Empirically, this allows the network to learn disentangled factors of variation, corresponding to different subparts of the latent code. Meanwhile, because it is essentially a single-layer flat model, our VLAE does not exhibit the problems we have identified with traditional hierarchical VAE described in Section~\ref{sec:limitations}.

Ladder Variational Autoencoders (LVAE) on the other hand, utilize the ladder architecture from the inference/encoding side; its generative model is a standard HVAE.
%, so it is ``the ladder approach to variational autoencoders''. 
While the ladder inference network performs better than the one used in the original HVAE, ladder variational autoencoders still suffer from the problems we discussed in Section \ref{sec:limitations}. The difference is between our model (VLAE) and LVAE is illustrated in Figure~\ref{fig:vael}

An additional advantage over ladder variational autoencoders (and more generally HVAEs) is that our definition of the generative network Equ.(\ref{eq:vlae_generator}) allows us to select a much richer family of generative models $p$. Because for HVAE the $\mathcal{L}_{ELBO}$ optimization requires the evaluation of $\log p(\zv_\ell|\zv_{\ell+1})$ shown in Equ.(\ref{equ:hvae_criteria}), a reparameterized HVAE inject noise into the network in a way that corresponds to a conditional distribution with a tractable log-likelihood. For example, a HVAE can inject noise $\epsv_\ell$ by
\beq
\zv_\ell = \muv_\ell(\zv_{\ell+1}) + \sigmav_\ell(\zv_{\ell+1}) \odot \epsv_\ell \label{equ:hvae_reparameterize}
\eeq
only because this corresponds to Gaussian conditional distributions $p(\zv_l|\zv_{l+1})$. In comparison, for VLAE we only require evaluation of $\log p(\xv|\zv_1, \cdots, \zv_L)$, so except for the bottom layer $\rv$ we can combine noise by any arbitrary black box function $\fv_\ell$. %This coincides with the recently proposed implicit generative models. 
%\stefano{discuss implicit vs explicit models?} \shengjia{I think the models are slightly different because we do compute likelihoods so we are not implicit.}
%\stefano{this does not make sense. $f$ is deterministic, so it does not inject noise?}
%Note that due to the difference in architecture we require a change of notation here where $\zv_\ell$ is the injected noise for VLAE, as opposed to $\epsv_\ell$ for HVAE. 

%\stefano{reference figure 4 somewhere}

\section{Experiments}
\begin{figure*}
\centering
\includegraphics[trim=100 200 100 200, clip, width=\linewidth]{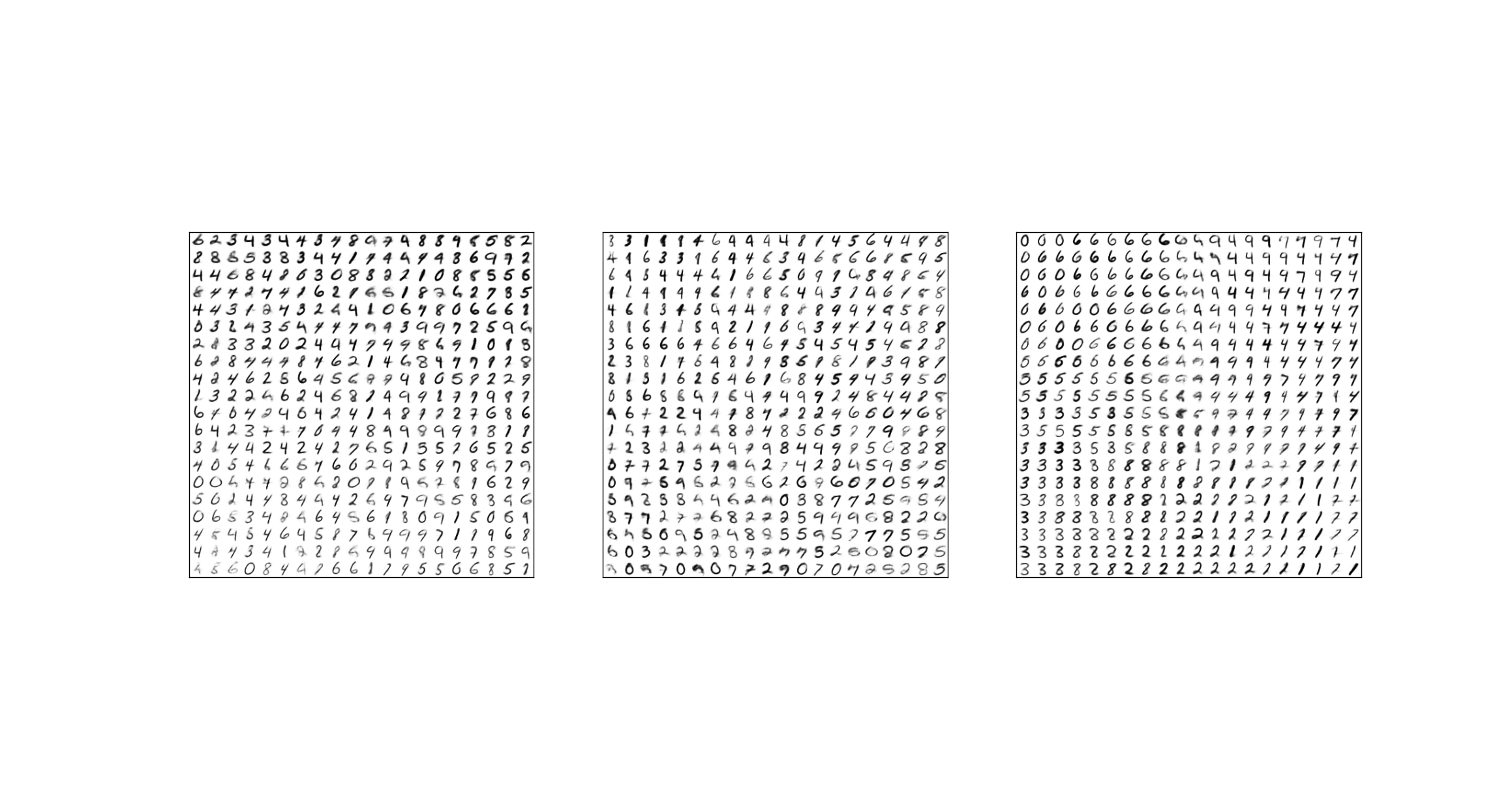}
\caption{VLAE on MNIST. Generated digits obtained by systematically exploring the 2D latent code from one layer, and randomly sampling from other layers. %Left: $\epsilon_0$, Middle: $\epsilon_1$, Right: $\epsilon_2$. 
\textbf{Left panel:} The first (bottom) layer encodes stroke width, \textbf{Center panel:} the second layer encodes digit width and tilt, \textbf{Right panel:} the third layer encodes (mostly) digit identity. Note that the samples are not of state-of-the-art quality only because of the restricted 2-dimensional latent code used to enable visualization. }
\label{fig:mnist_ladder_separate_feature}
\end{figure*}

\begin{figure*}
\centering
\includegraphics[trim=100 20 100 20, clip, width=\linewidth]{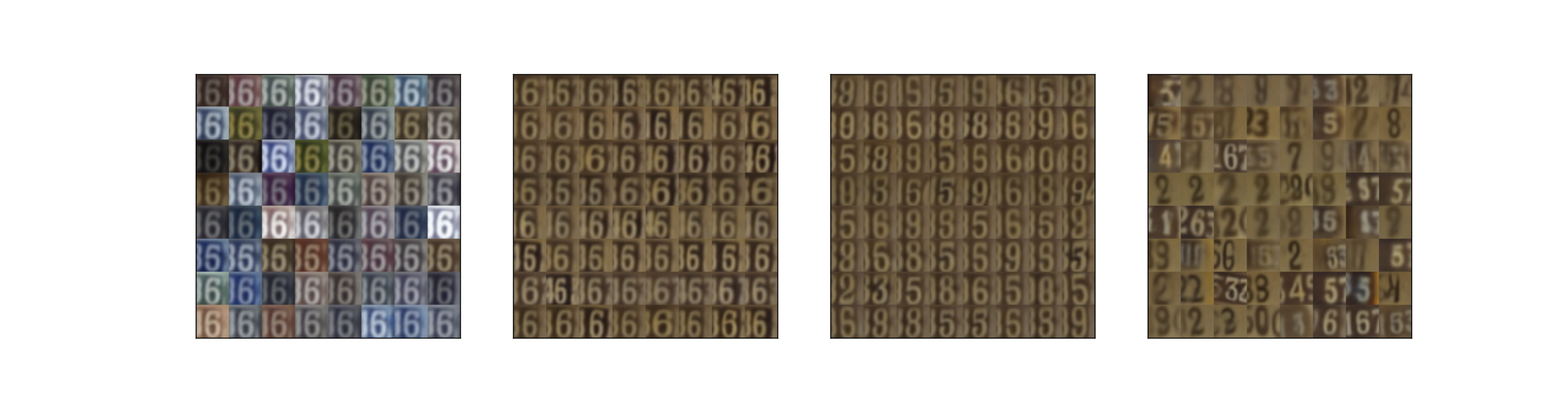}
\caption{VLAE on SVHN. Each sub-figure corresponds to images generated when fixing latent code on all layers except for one, which we randomly sample from the prior distribution. From left to right the random sampled layer go from bottom layer to top layer. \textbf{Left panel:} The bottom layer represents color schemes; \textbf{Center-left panel:} the second layer represents shape variations of the same digit; \textbf{Center-right panel:} the third layer represents digit identity (interestingly these digits have similar style although having different identities); \textbf{Right panel:} the top layer represents the general structure of the image.}
\label{fig:ladder_vae_svhn2}
\end{figure*}

\begin{figure*}
\centering
\includegraphics[trim=100 20 100 20, clip, width=\linewidth]{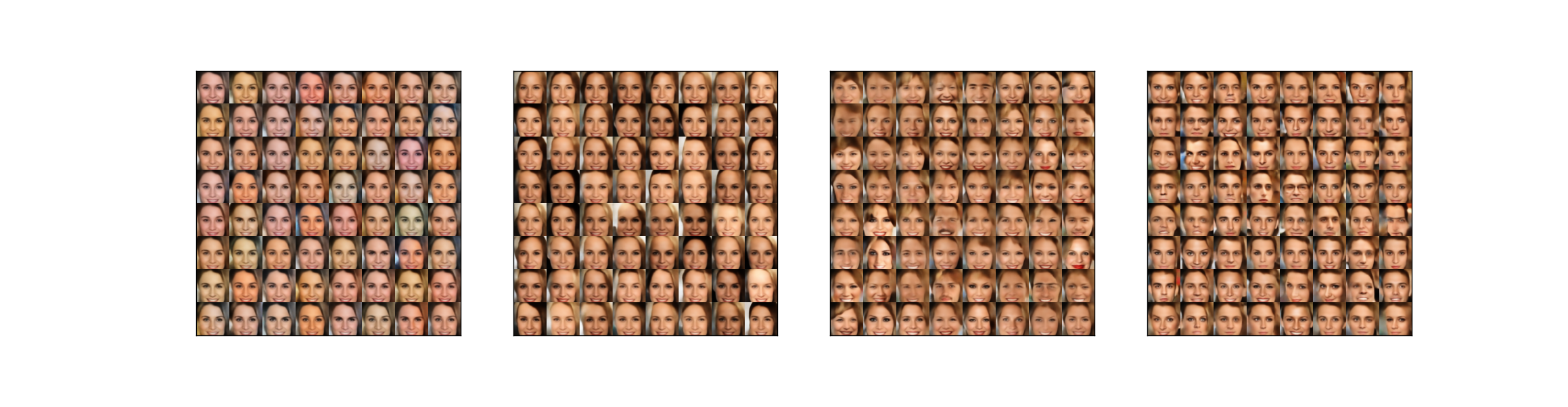}
\caption{VLAE on CelebA. Each sub-figure corresponds to images generated when fixing latent code on all layers except for one, which we randomly sample from the prior distribution.  From left to right the random sampled layer go from bottom layer to top layer. \textbf{Left panel:} The bottom layer represents ambient color; \textbf{Center-left panel:} the second bottom layer represents skin and hair color; \textbf{Center-right panel:} the second top layer represents face identity; \textbf{Right panel:} the top layer presents pose and general structure.}
\label{fig:ladder_vae_celebA2}
\end{figure*}

We train VLAE over several datasets and visualize the semantic meaning of the latent code. \footnote{Code is available at https://github.com/ShengjiaZhao/Variational-Ladder-Autoencoder} According to our assumptions, complex, high-level information will be learned by latent codes at higher layers, whereas simple, low-level features will be represented by lower layers. 

In Figure \ref{fig:mnist_ladder_separate_feature}, we visualize generation results from MNIST, where the model is a 3-layer VLAE with 2 dimensional latent code ($\zv$) at each layer. The visualizations are generated by systematically exploring the 2D latent code for one layer, while randomly sampling other layers. From the visualization, we see that the three layers encode stroke width, digit width and tilt and digit identity respectively. Remarkably, the semantic meaning of a particular latent code is stable with respect to the sampled latent codes from other layers. For example, in the second layer, the left side represents narrow digits whereas the right side represents wide digits. Sampling latent codes at other layers will control the digit identity, but have no influence over the width. This is interesting given that width is actually correlated with the digit identity; for example, digit 1 is typically thin while digit 0 is mostly wide. Therefore, the model will generate more zeros than ones if the latent code at the second layer corresponds to a wide digit, as displayed in the visualization.

Next we evaluate VLAE on the Street View House Number (SVHN, \citet{netzer2011reading}) dataset, where it is significantly more challenging to learn interpretable representations since it is relatively noisy, containing certain digits which do not appear in the center. However, as is shown in Figure \ref{fig:ladder_vae_svhn2}, our model is able to learn highly disentangled features through a 4-layer ladder, which includes color, digit shape, digit context, and general structure. These features are highly disentangled: since the latent code at the bottom layer controls color, modifying the code from other three layers while keeping the bottom layer fixed will generate a set of image which have the same tone in general. Moreover, the latent code learned at the top layer is the most complex one, which captures rich variations lower layers cannot accurately represent. 

Finally, we display compelling results from another challenging dataset, CelebA ~\citep{liu2015faceattributes}, which includes 200,000 celebrity images. These images are highly varied in terms of environment and facial expressions. We visualize the generation results in Figure \ref{fig:ladder_vae_celebA2}. As in the SVHN model, the latent code at the bottom layer learns the ambient color of the environment while keeping the personal details intact. Controlling other latent codes will change the other details of the individual, such as skin color, hair color, identity, pose (azimuth); more complicated features are placed at higher levels of the hierarchy.

\section{Discussions}
Training hierarchical deep generative models is a very challenging task, and there are two main successful families of methods. One family defines the destruction and reconstruction of data using a pre-defined process. Among them, LapGANs \citep{denton2015deep} define the process as repeatedly downsampling, and Diffusion Nets \citep{sohl2015deep} defines a forward Markov chain that coverts a complex data distribution to a simple, tractable one. Without having to perform inference, this makes training much easier, but it does not provide latent variables for other downstream tasks (unsupervised learning). 

Another line of work focuses on learning a hierarchy of latent variables by stacking single layer models on top of each other. Many models also use more flexible inference techniques to improve performance ~\citep{sonderby2016ladder, dinh2014nice, salimans2015markov, rezende2015variational, li2016learning, vae_autoregressive_flow2016}. However we show that there are limitations to stacked VAEs. %In particular Ladder Variational Autoencoders \citep{sonderby2016ladder} introduce a top-down refinement module in addition to the original bottom-up inference network, where it takes a bottom-up deterministic pass to evaluate state variables and then a top-down stochastic pass to obtain latent variables. Similar ideas are also proposed in \citep{bachman2016architecture}. %The Matryoshka Network \citep{bachman2016architecture} proposes the use of shortcut and residual connections to transform the generative architecture into what is essentially a special type of one-layer autoregressive model ~\citep{larochelle2011neural}.

Our work distinguishes itself from prior work by explicitly discussing the purpose of learning such models: the advantage of learning a hierarchy is not in better representation efficiency, or better samples, but rather in the introduction of structure in the features, such as hierarchy or disentanglement. This motivates our method, VLAE, which justifies our intuition that a reasonable network structure can be, by itself, highly effective at learning structured (disentangled) representations. Contrary to previous efforts on hierarchical models, we do not stack VAEs on top of each other, instead we use a ``flat'' approach. This can be applied in combination with the stacking approach. 

%instead of being only hierarchical in architecture, VLAE unlocks the potential to extract structured information from the latent variables through the use of a hierarchical deep generative model.  

The results displayed in the experiments resemble those obtained with InfoGAN \citep{chen2016infogan}; both frameworks learn disentangled representations from the data in an unsupervised manner. The InfoGAN objective, however, explicitly maximizes the mutual information between the latent variables and the observation; whereas in VLAE, this is achieved through the reconstruction error objective which encourages the use of latent codes. Furthermore we are able to explicitly disentangle features with different level of abstractness.  %On the other hand, the KL divergence term serves as a regularization term to approximately reduce that mutual information. 
% To see this, we show that the mutual information between $\xv$ and $\zv$ can be written as 
% \beq
% \mb{I}(\zv, \xv) = \bb{E}_{p(\zv \lvert \xv)} [\log p(\zv \lvert \xv) - \log p(\zv)]
% \eeq
% This is identical to $\bb{KL}(q(\zv \lvert \xv) \lVert p(\zv))$ if we replace $p(\zv \lvert \xv)$ with $q(\zv \lvert \xv)$, which happens to be exactly $p(\zv \lvert \xv)$ if we assume an inference network that is flexible enough. Therefore, larger KL loss for a particular $\zv_\ell$ may indicate that this latent code has higher mutual information with the data, which is what we observed in the experiments. 
%Instead of explicitly telling each code to maximize certain information according to a predefined prior, 
%Combined we implicitly encourage the latent variables to automatically capture the right amount of information.

\section{Conclusions}
In this paper, we discussed the potential practical value of learning a hierarchical generative model over a non-hierarchical one. We show that little can be gained in terms of representation efficiency or sample quality. We further show that traditional HVAE models have trouble learning structured features. Based on these insights, we consider an alternative to learning structured features by leveraging the expressive power of a neural network. Empirical results show that we can learn highly disentangled features.

One limitation of VLAE is the inability to learn structures other than hierarchical disentanglement.
Future work should consider more principled ways of designing architectures that allow for learning features with more complex structures.

\section{Acknowledgement}
This research was supported by Intel Corporation, NSF (\#1649208) and Future of Life Institute (\#2016-158687).

\bibliographystyle{icml2017}
\begin{small}
%\bibliography{ref}
\bibliography{bib}
\end{small} 

\newpage
\appendix

\section{Additional Results}
\begin{prop}
\label{prop:hvae_redundency}
$\mathcal{L}_{ELBO}$ for HVAE in Eq.(\ref{equ:hvae_criteria}) is optimized when $\mathcal{L}_{ELBO} = -H(p_{data}(x))$. If $\mathcal{L}_{ELBO}$ is optimized the following Gibbs sampling chain converges to $p_{data}(\xv)$ if it is ergodic
\begin{align*}
\zv^{(t)} &\sim q(\zv|\xv^{(t)}) \\ 
\xv^{(t+1)} &\sim p(\xv|\zv^{(t)}) \numberthis \label{equ:gibbs_chain_autoregressive}
\end{align*}
\end{prop}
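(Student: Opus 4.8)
The plan is to follow the proof of Proposition~\ref{prop:hvae_redundency_markov} almost verbatim; the key observation is that that argument never used the Markov factorization of the generative model, only that $p(\xv,\zv)$ factors as $p(\zv)\,p(\xv\lvert\zv)$ with $\zv=(\zv_1,\dots,\zv_L)$ treated as a single latent block, which holds for any HVAE, autoregressive or Markov.

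First I would rewrite the ELBO, exactly as in the body of the paper, in the form
\begin{align*}
\mathcal{L}_{ELBO} = -E_{p_{data}(\xv)}[\bb{KL}(q(\zv\lvert\xv)\lVert p(\zv\lvert\xv))] - \bb{KL}(p_{data}(\xv)\lVert p(\xv)) - H(p_{data}(\xv)),
\end{align*}
where $p(\zv\lvert\xv)=p(\xv,\zv)/p(\xv)$ is the exact posterior of the generative model. Non-negativity of the KL divergence, together with the fact that each KL term vanishes precisely when its two arguments agree, gives $\mathcal{L}_{ELBO}\le -H(p_{data}(\xv))$, with equality iff $p(\xv)=p_{data}(\xv)$ and $q(\zv\lvert\xv)=p(\zv\lvert\xv)$ for $p_{data}$-almost every $\xv$. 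This pins down the value of the optimum.

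Assuming the optimum is attained, I would then combine the two equalities into $q(\xv,\zv):=p_{data}(\xv)q(\zv\lvert\xv)=p(\xv)p(\zv\lvert\xv)=:p(\xv,\zv)$, i.e. the two joint distributions over $(\xv,\zv)$ coincide. Equal joints have equal conditionals, so in particular $q(\xv\lvert\zv)=p(\xv\lvert\zv)$ for $q$-almost every $\zv$. Now consider the Gibbs sampler alternating $\zv^{(t)}\sim q(\zv\lvert\xv^{(t)})$ and $\xv^{(t+1)}\sim q(\xv\lvert\zv^{(t)})$: by construction its stationary distribution is the joint $q(\xv,\zv)$, whose $\xv$-marginal is $p_{data}(\xv)$, and under the assumed ergodicity the chain converges to it. Replacing $q(\xv\lvert\zv)$ by $p(\xv\lvert\zv)$ leaves the transition kernel unchanged, so the chain in~(\ref{equ:gibbs_chain_autoregressive}) also has its $\xv$-iterates converging to $p_{data}(\xv)$. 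Since this chain only invokes $q(\zv\lvert\xv)$ and $p(\xv\lvert\zv_{>0})=p(\xv\lvert\zv)$, it never touches the intermediate conditionals $p(\zv_\ell\lvert\zv_{>\ell})$, $1\le\ell<L$, which is the point advertised in Section~\ref{sec:limitations}.

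I do not expect a real obstacle here: the steps are the same as in Proposition~\ref{prop:hvae_redundency_markov}. The only things to be a little careful about are that the equalities are ``almost everywhere'' statements with respect to the relevant measures, and that ergodicity is precisely what upgrades ``$q(\xv,\zv)$ is invariant under the kernel'' to ``the chain converges to $q(\xv,\zv)$''; both are covered by the hypotheses. The one conceptual remark worth making explicit is that, in contrast to Proposition~\ref{prop:hvae_redundency_markov}, no structural assumption on $p(\zv)$ or on the hierarchy is needed, because we collapse the whole latent hierarchy into a single block $\zv$ and work with $p(\xv\lvert\zv)$ directly.
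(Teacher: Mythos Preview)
Your proposal is correct and follows essentially the same approach as the paper's own proof: invoke the ELBO decomposition from Proposition~\ref{prop:hvae_redundency_markov} to conclude $q(\zv\lvert\xv)=p(\zv\lvert\xv)$ at the optimum, then replace $q(\xv\lvert\zv)$ by $p(\xv\lvert\zv)$ in the $q$-Gibbs chain. Your write-up is in fact more detailed than the paper's (which is only a few lines referring back to Proposition~\ref{prop:hvae_redundency_markov}), but the argument is the same.
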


\begin{proof}[Proof of Proposition~\ref{prop:hvae_redundency}]
As in the proof of Proposition~\ref{prop:hvae_redundency_markov} when $\mathcal{L}_{ELBO}$ is optimized, $q(\zv|\xv) = p(\zv|\xv)$. Because the following Gibbs chain converges to $p_{data}(\xv)$
\begin{align*}
\zv^{(t)} &\sim q(\zv|\xv^{(t)}) \\ 
\xv^{(t+1)} &\sim q(\xv|\zv^{(t)}) 
\end{align*}
We can replace 
%the possibly intractable 
$q(\xv|\zv^{(t)})$ with $p(\xv|\zv^{(t)})$ and the chain still converges to $p_{data}(\xv)$.
\end{proof}

\section{Experimental Details}

\subsection{Gaussian HVAE}
\textbf{Architecture: }
For $l=1, 2$
\[ \zv_l \sim \mathcal{N}(W_1 \fv_l(\zv_{l+1}), sigm(W_2 \fv_l(\zv_{l+1}))^2)\]
where $W_1, W_2$ are trainable linear transformation matrices, and $sigm$ is sigmoid activation function. $\fv_l$ is a two layer dense network. For $l=0$, we let
\[ \xv \sim \mathcal{N}(\fv_0(\zv_1), \sigma^2 I) \]
where $\sigma$ is a hyper-parameter that can be specified apriori or trained. $\fv_0$ is a two layer convolutional network with $1/2$ stride for spatial up-sampling. For inference we use the same architecture as the generator.

\textbf{Learning: }During training we use the Adam \citep{adam_optimization2014} optimizer with learning rate $10^{-4}$. We also anneal the scale the KL-regularization from $0$ to $1$ to encourage use of latent feature during early stages of training. 

\subsection{VLAE}
For VLAE, we use varying layers of convolution depending on size of input image. However, for the ladder connections we do not use convolution. Because of our argument in introduction and Figure~\ref{fig:unreasonable_face}, generative models do not benefit from convolutional latent features. Therefore we always flatten convolutional layers and apply linear transformation to reduce dimension for each ladder connection. For implementation details please refer to our code. 
\end{document}